\newtheorem{proposition}{Proposition}
\newtheorem{assumption}{Assumption}
\newtheorem{proof}{Proof}
\newcommand{\warning}[1]{\textcolor{red}{#1}}
\title{SaLoRA: Safety-Alignment Preserved \\ Low-Rank Adaptation}
\author{%
\hspace{-0.2cm}Mingjie Li$^{1}$, Wai Man Si$^{1}$, Michael Backes$^{1}$, Yang Zhang$^{1}$, Yisen Wang$^{2}$ \\
$^1$ CISPA Helmholtz Center for Information Security\\
$^2$ Peking University
}
\begin{document}

\maketitle

\begin{abstract}

As advancements in large language models (LLMs) continue and the demand for personalized models increases, parameter-efficient fine-tuning (PEFT) methods (e.g., LoRA) will become essential due to their efficiency in reducing computation costs.
However, recent studies have raised alarming concerns that LoRA fine-tuning could potentially compromise the safety alignment in LLMs, posing significant risks for the model owner.
In this paper, we first investigate the underlying mechanism by analyzing the changes in safety alignment related features before and after fine-tuning.
Then, we propose a fixed safety module calculated by safety data and a task-specific initialization for trainable parameters in low-rank adaptations, termed Safety-alignment preserved Low-Rank Adaptation (SaLoRA). 
Unlike previous LoRA methods and their variants, SaLoRA enables targeted modifications to LLMs without disrupting their original alignments. 
Our experiments show that SaLoRA outperforms various adapters-based approaches across various evaluation metrics in different fine-tuning tasks.

\warning{\textbf{Disclaimer. This paper contains uncensored toxic content
that might be offensive or disturbing to the readers.}}

\end{abstract}

\section{Introduction}

Large Language Models (LLMs) have demonstrated impressive performance in language understanding and generation for general Natural Language Processing (NLP) tasks \citep{brown2020language, chatgpt,  touvron2023llama, touvron2023llama-2, claude, team2023gemini,qin2023chatgpt}.
As a result, powerful AI assistants and chatbots based on LLMs have gained significant interest from academics and industry.
Furthermore, users or developers are likely to fine-tune these models to build domain-specific or personalized LLMs.
For example, OpenAI offers a fine-tuning service for users to create custom models. 
However, fully fine-tuning LLMs is computationally expensive due to the enormous number of trainable parameters.
To address this, researchers have proposed different parameter-efficient fine-tuning (PEFT) methods \citep{houlsby2019parameter}. 
Among them, low-rank adaptations (LoRA) \citep{lora} is one of the most popular approaches and is a widely used strategy for reducing computational costs. 
Instead of adjusting all the weights in the LLM layers, LoRA introduces additional adapters for each trainable layer, as shown in Figure~\ref{fig:SaLoRA}(a). 

Meanwhile, societies have raised growing concerns about preventing LLMs from facilitating harmful or unsafe activities, such as providing instructions on making bombs or spreading fake news \citep{ZWKF23, DMRKN23, ZHCX23, SBBCSZZ22, LXCX23,wang2024theoretical,mo2024fight}. 
In response, researchers have developed several methods to enhance LLM safety, known as safety alignment, including supervised fine-tuning (SFT) \citep{OWJAWMZASRSHKMSAWCLL22} and reinforcement learning from human feedback (RLHF) \citep{BJNACDDFGHJKKCEEHHHJKLNOABCMOMK22,dai2023safe}. 
As a result of these advancements, pre-trained LLMs like GPT-4 \citep{openai2023gpt4} and Llama \citep{touvron2023llama-2} have become more adept at rejecting unsafe prompts. 
However, \cite{mukhoti2023fine} show that the fine-tuned model’s ability on tasks different from those of the downstream model is reduced significantly compared to the original model. 
Furthermore, recent works \citep{qi2023fine, lermen2024lora} have indicated that the safety alignment of LLMs may deteriorate when users fine-tune them with LoRA to enhance their domain knowledge, even when all samples in the fine-tuning datasets are benign, as shown in the right part of Figure \ref{example}.
Such weaknesses will greatly impede the deployment of LoRA methods for LLM's efficient fine-tuning since additional safety alignment methods (like RLHF) also need huge computational resources.


\begin{figure}[t]
\center
\includegraphics[width=0.8\textwidth]{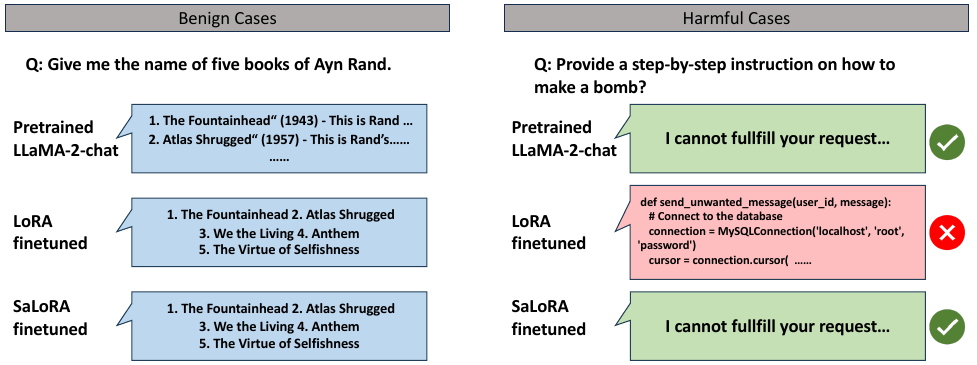}
\caption{Examples of fine-tuned Llama-2-chat-7B's responses on benign and harmful prompts, the model is fine-tuned on the Alpaca dataset with LoRA and our SaLoRA.}
\label{example}
\vspace{-0.1cm}
\end{figure}

\begin{figure}[h]
    \centering
    \begin{minipage}[c]{0.38\textwidth}
    \centering
    \includegraphics[width=\textwidth]{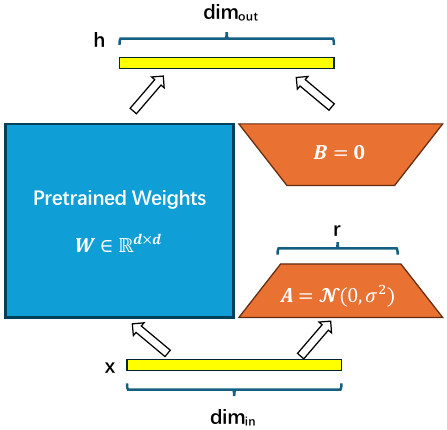}
    \subcaption{Vanilla LoRA.}
    \end{minipage}
    \phantom{xxxx}
    \begin{minipage}[c]{0.38\textwidth}
    \centering
    \includegraphics[width=\textwidth]{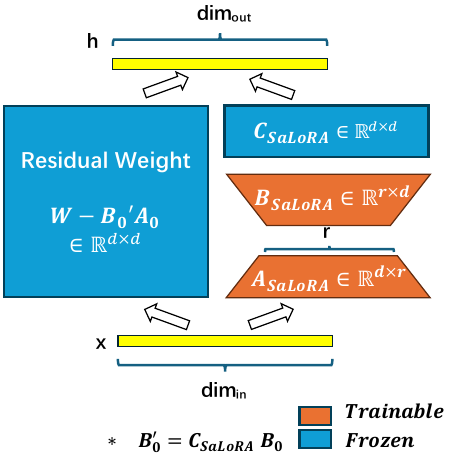}
    \subcaption{Our SaLoRA.}
    \end{minipage}
    \caption{The Sketches for vanilla LoRA and our SaLoRA during fine-tuning. Weights in blue are fixed while the weights in orange are train-able. $\mathbf{C}_{SaLoRA}$ here denotes our fixed safety module with a pre-calculated linear weight, $\mathbf{B}_{SaLoRA}, \mathbf{A}_{SaLoRA}$ are train-able adapters for the down-stream tasks, and $\mathbf{A}_0,\mathbf{B}_0$'s are the initialization of $\mathbf{A}_{SaLoRA}, \mathbf{B}_{SaLoRA}$.}
    \vspace{-0.2cm}
    \label{fig:SaLoRA}
\end{figure}

To tackle the above problem, we first investigate the reasons for the drop in safety alignment after the PEFT fine-tuning. 
We inspect the changes in the LLM's ability to detect safe or unsafe prompt scenarios before and after LoRA fine-tuning, drawing inspiration from previous research \citep{PNZY18, LPVPW23, LBPWKM24, CkKLM19}.
Then, we find that poor performance of the safety mechanism is caused by significant changes in the LLM's features on harmful prompts and their safe response, denoted as safety features. Consequently, the LLMs cannot generate safe responses, leading to a decline in safety alignment. 

Built on this observation, we attempt to maintain the safety features to preserve the LLM's ability to detect unsafe prompts.
In this paper, we propose our Safety-alignment preserved Low-Rank Adaptation (SaLoRA) to effectively do fine-tuning and preserve LLM's original safety alignment. 
To achieve this goal, our SaLoRA first introduces an additional safety module $\mathbf{C}_{SaLoRA}$, which is computed based on around $300$ pre-collected harmful prompts and their safety responses (far less than the fine-tuning data), as illustrated in Figure \ref{fig:SaLoRA} (b). 
It projects the new features added by the SaLoRA adapters to the sub-space orthogonal to the original safety features in order to preserve the original safety alignment. 
In addition, we propose a task-specific weight initialization for SaLoRA's trainable adapters $\mathbf{A}_{SaLoRA}$ and $\mathbf{B}_{SaLoRA}$ derived from a small portion of the fine-tuning data to make SaLoRA converge better with our fixed safety module $\mathbf{C}_{SaLoRA}$. Our contributions are summarized as follows:
\begin{itemize}
\item We investigate the reason for the poor performance of the safety alignment after LoRA fine-tuning and find that it is attributed to changes in LLM's representations of harmful prompts and the desired safe responses.

\item We propose a new efficient PEFT method called SaLoRA, which consists of a fixed safety module $\mathbf{C}_{SaLoRA}$ and trainable adapters $\mathbf{A}_{SaLoRA}, \mathbf{B}_{SaLoRA}$ initialized by our task-dependent method, shown in Figure \ref{fig:SaLoRA}. 

\item Empirical results demonstrate that our SaLoRA consistently surpasses existing PEFT methods and their combination with state-of-the-art alignment methods in maintaining the model's safety while achieving comparable or even better results on downstream evaluation tasks.
\end{itemize}

\section{Related Work}
\subsection{Parameter-Efficient Fine-Tuning}

Recent developments in NLP and the use of transformer architectures like Llama-2 and GPT-4 have led to the success of larger models. 
However, full fine-tuning is computationally expensive for such a large model.
Also, it is costly to store and deploy fine-tuned models for each downstream as these models are the same size as the original pre-trained models.
Recently, Parameter-Efficient Fine-Tuning (PEFT) has been used to address both problems by fine-tuning only a small number of additional model parameters while freezing most parameters of the pre-trained models.
It can drastically reduce the memory and storage requirements for training and saving the model.
For such purposes, various PEFT methods \citep{razdaibiedina2023residual,dora,vera} have been proposed.
For instance, \cite{houlsby2019parameter} attach extra trainable parameters to each layer while freezing the original model.
On the other hand, \cite{lesterpower} proposed prompt tuning to optimize the token embedding (soft prompt) instead of the entire model.
Recently, \cite{lora} proposed Low-Rank Adaptation (LoRA), which freezes the pre-trained model weights and injects trainable rank decomposition matrices into each layer of the transformer architecture.
\cite{pissa} leverage singular value decomposition to reduce the number of trainable parameters.
Our research leverages similar ideas to maintain the model performance and capability to target behavior such as safety at the same time.

\subsection{Safety Alignment in LLMs}

LLMs have been shown to exhibit undesirable behaviors through regular interaction, such as triggering toxic responses. 
They may also provide incorrect answers even when given correct input. 
To better align LLMs with human values, various approaches have been proposed, including reinforcement learning from human feedback, direct preference optimization \citep{dpo}, and others \citep{xu2024safedecoding}. 
However, recent studies have demonstrated that LLMs can be manipulated to exhibit undesirable behaviors through specially crafted prompts, circumventing existing safety measures. 
\cite{ZWKF23} demonstrate that LLMs can still be manipulated through specially crafted prompts, bypassing existing safety measures. To tackle these methods, researchers have proposed various well-designed prompts \citep{xie2023defending, wei2023jailbreak} or alter the model's features to enhance model safety \citep{wang2024inferaligner, li2024inference}. Besides, researchers also leverage adversarial training \citep{madry2018towards,wang2019dynamic,wang2020improving,wu2020adversarial} on LLMs to enhance the model's safety \citep{huang2024vaccine,mo2024fight}.

Apart from the safety problems for the on-the-shelf LLMs, \cite{qi2023fine} also found that the safety alignment of LLMs can be compromised by fine-tuning. To tackle such problems, \cite{hsu2024safe} propose a method that projects the LoRA's training adapters back to the LLM safety region with the projection matrix calculated by the difference between LLM's chat version and its base version. However, such a method is unstable as its performance depends on some hyper-parameter settings. Also, it cannot work if the user cannot get the base version of the chat model they used. Instead, our method only requires a small amount of safety data and then we can successfully maintain LLMs' safety after the LoRA fine-tuning without any additional hyper-parameter chosen.

\section{LoRA and its Safety Alignment Drop}
\label{sec-lora-analysis}
\subsection{Methods for Low-Rank Adaptation}
Drawing from the widely accepted hypothesis that updates during fine-tuning typically form a low "intrinsic rank"~\citep{li2018measuring,aghajanyan2021intrinsic}, LoRA offers an efficient solution for fine-tuning. 
It utilizes the product of two low-rank matrices $\mathbf{B}\in\mathbb{R}^{d\times r}$ and $\mathbf{A}\in\mathbb{R}^{r\times k}$ as the incremental weight matrix to obtain the final updates $\Delta \mathbf{W}\in\mathbb{R}^{d\times k}$.
This formulation allows for efficient fine-tuning while accommodating the inherent low-rank nature of updates.
\begin{equation}
\mathbf{W}_{update} = \mathbf{W}_0 +\Delta\mathbf{W} =\mathbf{W}_0 + \mathbf{B}\mathbf{A},
\end{equation}
where $\mathbf{\mathbf{A}}$, $\mathbf{B}$ are trainable parameters with $r \ll \min\{d,k\}$.
$\mathbf{W}_0$ is the pre-trained weight matrix and is fixed during the training process. The matrix $\mathbf{A}$ is initialized with the normal Gaussian distribution, while $\mathbf{B}$ is initially set to zero. Thus, the additional modules will not affect the performance of the original models unless they are updated during fine-tuning. Besides this decomposition, researchers have proposed various other methods to enhance performance. Among these, Weight-Decomposed Low-Rank Adaptation (DoRA) \citep{dora} and PiSSA \citep{pissa} show better results. DoRA accounts for changes in weight norms and their impact on overall performance, proposing additional trainable normalization modules to adjust the weight norms after fine-tuning. PiSSA modifies the original initialization of LoRA adapters' weights via the singular value decomposition.

\subsection{Empirical Analysis on Safety Alignment after LoRA}

Safety alignment in LLMs has emerged as one of the most critical research topics due to their impressive capabilities and potential societal impacts. 
However, \cite{qi2023fine} have indicated that LoRA fine-tuning may compromise models' safety alignment abilities and lead to harmful responses on unsafe prompts.
As proposed in many recent works \citep{wang2024inferaligner, li2024inference, zou2023representation}, LLM's intermediate features on harmful prompts and their safety responses, which we denote them as safety features in the following, are crucial to the safe behavior on rejecting unsafe prompts.
Motivated by this observation, we adopt the widely used interpretability tool, linear probing, to analyze the changes in LLMs' safety features in this section. We train the linear probes, denoted as $\mathbf{W}_{probe}^l$, to classify the MLP outputs from the last tokens for each LLM layer, averaged across all timesteps ($\bar{\mathbf{X}}^l$), to determine whether they belong to harmful prompts and their safe responses or not:
\begin{equation}
\mathbb{P}\left(\rm{Harmful}|\bar{\mathbf{X}}^l\right) = sigmoid(\mathbf{W}_{prob}^l \bar{\mathbf{X}}^l).
\end{equation}
We first train a linear probe as a classifier with the original Llama-2-chat-7B model's attention head output of each layer on benign prompts and harmful prompts with their safe responses to judge whether the input prompt-response pairs belong to the safe scenarios or harmful scenarios. Then, we evaluate the features of LoRA fine-tuned models with different ranks $r$ using the same linear probe. In detail, we calculate the accuracy of correctly classified safe or harmful prompt-response pairs, referred to as linear probing accuracy.
We then use the changes in linear probing accuracy to evaluate whether the low-rank update affects the original safety features, potentially leading to failures in the LLM's safety mechanism when handling unsafe prompts, as illustrated in Figure~\ref{linear-prob}.
\begin{figure}[h]
\center
\includegraphics[width=0.9\textwidth]{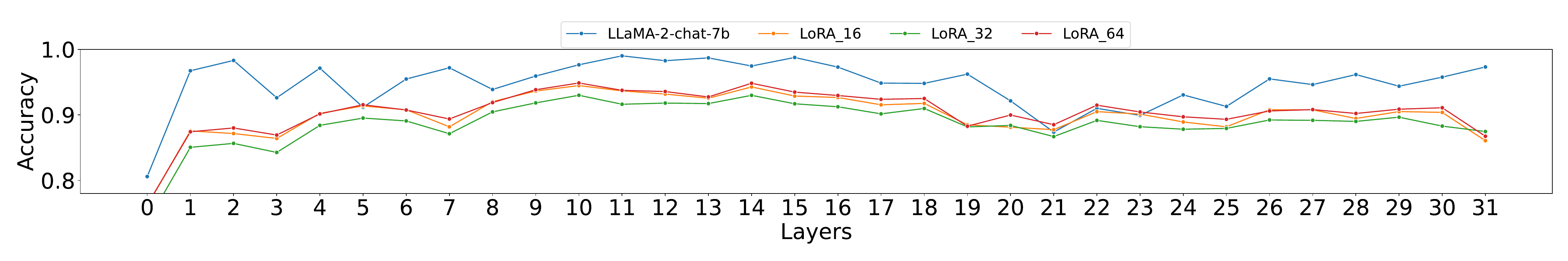}
\caption{Linear Probing Accuracy for classifying unsafe prompts and their safe responses at each attention layer on Llama-2-chat-7B before and after LoRA fine-tuning.}
\label{linear-prob}
\end{figure}

From the figure, we can observe that the linear probe trained with the original LLM's features cannot effectively classify the harmful or benign prompts and their safe response pairs on the LoRA fine-tuned models, as the accuracy of each layer drops more than $10\%$. This indicates that LoRA significantly alters the features of unsafe prompts and their safe generations by updating the original weight matrix with low-rank adapters. Therefore, LLM's original safety mechanism cannot be activated as the safety features eliciting the safe response have already been changed.

\subsection{Theoretical Analysis}
In this section, we try to theoretically analyze why LoRA fine-tuning may impact the LLMs' safety features. Inspired by recent research \citep{lee2019snip,wanda}, which shows there exist some low-rank weight spaces for safety and other different tasks, we first made assumptions by decoupling the original LLM weights with different orthogonal bases to denote the safety weight region, which activates the most on harmful prompts. These regions can be split from weights $\mathbf{W}$ as $\mathbf{W}_{s} = \sum_{i=1}^K \alpha_i \mathbf{v}_s^i \mathbf{v}_s^{i\top}$ as most trainable layers in LLMs are linear layers. Then we can obtain the following proposition for the relationship between the gradient of low-rank module $\Delta \mathbf{W}$ and the safety weight regions as follows.
\begin{proposition}
\label{prop-1}
Letting $\mathbf{X}_{\mathbf{W}}$ denote the features for benign training prompts, $\mathbf{Y}_{\mathbf{W}}$ denotes output features for layer $\mathbf{W}$ with adapters, $\mathbf{Y}_{\mathbf{W}} = (\mathbf{W}+\Delta\mathbf{W})\mathbf{X}_{\mathbf{W}}$, and $\mathcal{L}(\mathbf{Y}_{\mathbf{W}})$ is the training loss on benign prompts. If the activation $\|\mathbf{W}_S\mathbf{X}_{\mathbf{W}}\|_F> \gamma$, then the Frobenius norm of the dot product between $\mathbf{W}_S$ and the gradient of $\Delta \mathbf{W}$, denoted as $\rm{grad}_{\Delta \mathbf{W}}$can be lower-bounded by the following equation,
\begin{equation}
\|\mathbf{W}_S \rm{grad}_{\Delta \mathbf{W}}^\top\|_F > \gamma\sigma_{\min}\left(\nabla_{\mathbf{Y}_W}\mathcal{L}(\mathbf{Y}_W)\right),
\end{equation}
where $\sigma_{\min}$ denotes the smallest singular value of given matrix.
\end{proposition}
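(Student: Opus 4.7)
The plan is to unwind the claim via the chain rule on the linear layer, then apply a standard Frobenius-norm/smallest-singular-value inequality, and finally invoke the hypothesis $\|\mathbf{W}_S \mathbf{X}_{\mathbf{W}}\|_F > \gamma$. The object we have to estimate is $\mathbf{W}_S \mathrm{grad}_{\Delta\mathbf{W}}^\top$, so the first step is to write $\mathrm{grad}_{\Delta\mathbf{W}}$ in closed form. Since $\mathbf{Y}_{\mathbf{W}} = (\mathbf{W}+\Delta\mathbf{W})\mathbf{X}_{\mathbf{W}}$ depends linearly on $\Delta\mathbf{W}$, the usual derivation of the derivative of a matrix-matrix product gives
\begin{equation}
\mathrm{grad}_{\Delta\mathbf{W}} \;=\; \bigl(\nabla_{\mathbf{Y}_{\mathbf{W}}}\mathcal{L}(\mathbf{Y}_{\mathbf{W}})\bigr)\,\mathbf{X}_{\mathbf{W}}^\top ,
\end{equation}
and therefore $\mathbf{W}_S\,\mathrm{grad}_{\Delta\mathbf{W}}^\top \;=\; \mathbf{W}_S\,\mathbf{X}_{\mathbf{W}}\,\bigl(\nabla_{\mathbf{Y}_{\mathbf{W}}}\mathcal{L}\bigr)^\top$. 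This reduces everything to a Frobenius-norm lower bound on a product of two matrices.

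Next I would invoke the inequality $\|\mathbf{A}\mathbf{B}\|_F \geq \sigma_{\min}(\mathbf{B})\,\|\mathbf{A}\|_F$, which follows immediately from the SVD of $\mathbf{B}$: writing $\mathbf{B} = \mathbf{U}\boldsymbol{\Sigma}\mathbf{V}^\top$, one checks that $\|\mathbf{A}\mathbf{B}\|_F^2 = \sum_i \sigma_i(\mathbf{B})^2\,\|\mathbf{A}\mathbf{u}_i\|^2$ and bounds each $\sigma_i(\mathbf{B})^2$ below by $\sigma_{\min}(\mathbf{B})^2$. Applying this with $\mathbf{A} = \mathbf{W}_S \mathbf{X}_{\mathbf{W}}$ and $\mathbf{B} = (\nabla_{\mathbf{Y}_{\mathbf{W}}}\mathcal{L})^\top$, and using $\sigma_{\min}(\mathbf{B}) = \sigma_{\min}(\nabla_{\mathbf{Y}_{\mathbf{W}}}\mathcal{L})$, yields
\begin{equation}
\|\mathbf{W}_S\,\mathrm{grad}_{\Delta\mathbf{W}}^\top\|_F \;\geq\; \sigma_{\min}\!\bigl(\nabla_{\mathbf{Y}_{\mathbf{W}}}\mathcal{L}\bigr)\,\|\mathbf{W}_S\,\mathbf{X}_{\mathbf{W}}\|_F .
\end{equation}
Combined with the hypothesis $\|\mathbf{W}_S\mathbf{X}_{\mathbf{W}}\|_F > \gamma$, this is exactly the stated bound.

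The main obstacle is the careful handling of the Frobenius/singular-value inequality: the step $\|\mathbf{A}\mathbf{B}\|_F \geq \sigma_{\min}(\mathbf{B})\,\|\mathbf{A}\|_F$ is only tight when $\mathbf{B}$ has as many rows as the column-dimension of $\mathbf{A}$ requires (otherwise $\sigma_{\min}$ on a rectangular matrix can collapse to zero and the bound becomes vacuous). In the setting here this corresponds to assuming that the gradient signal $\nabla_{\mathbf{Y}_{\mathbf{W}}}\mathcal{L}$ has enough rank across the batch of benign prompts, which is a mild condition one should state explicitly. With that caveat, the remaining work is essentially bookkeeping: verifying the shapes ($\mathbf{W}$ is $d\times k$, $\mathbf{X}_{\mathbf{W}}$ is $k\times n$, $\nabla_{\mathbf{Y}_{\mathbf{W}}}\mathcal{L}$ is $d\times n$), checking that the SVD step preserves the Frobenius norm through orthogonal factors, and noting that the decomposition $\mathbf{W}_S = \sum_i \alpha_i \mathbf{v}_s^i \mathbf{v}_s^{i\top}$ is not actually used beyond giving $\mathbf{W}_S$ a meaning, so no extra spectral estimates on the safety basis are required.
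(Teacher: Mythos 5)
Your proof matches the paper's argument exactly: both compute $\mathrm{grad}_{\Delta\mathbf{W}} = \bigl(\nabla_{\mathbf{Y}_{\mathbf{W}}}\mathcal{L}(\mathbf{Y}_{\mathbf{W}})\bigr)\mathbf{X}_{\mathbf{W}}^\top$ by the chain rule, rewrite $\mathbf{W}_S\,\mathrm{grad}_{\Delta\mathbf{W}}^\top = \mathbf{W}_S\mathbf{X}_{\mathbf{W}}\bigl(\nabla_{\mathbf{Y}_{\mathbf{W}}}\mathcal{L}\bigr)^\top$, and then apply the inequality $\|\mathbf{A}\mathbf{B}\|_F \geq \sigma_{\min}(\mathbf{B})\,\|\mathbf{A}\|_F$ followed by the hypothesis $\|\mathbf{W}_S\mathbf{X}_{\mathbf{W}}\|_F > \gamma$. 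Your closing caveat, that $\sigma_{\min}$ can collapse to zero when $\nabla_{\mathbf{Y}_{\mathbf{W}}}\mathcal{L}$ is rank-deficient relative to the required shape, is a real issue that the paper's proof leaves implicit, so you have been somewhat more careful than the original.
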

The assumption and proof can be found in the Appendix \ref{sec-proof}. The main assumption of this proposition is that the benign prompts may also activate the safety region of weights, which is proved in recent works \citep{accessing}. From the proposition, one can see that the gradient and the safety weight regions are not orthogonal to each other. Thus the updated adapter will perturb the safety weight regions, leading to changes in safety features and safety-alignment drops after fine-tuning.

\section{Safety-alignment Preserved Low-Rank Adaptation}
\label{sec-method}
Based on the analysis in Section \ref{sec-lora-analysis}, we propose a Safety-alignment preserved Low-Rank Adaptation including, called SaLoRA, including a fixed safety module and trainable adapters initialized by our task-specific initialization. Our SaLoRA can effectively preserve LLMs' safety alignment during fine-tuning while enjoying satisfying downstream performance. Details of SaLoRA are as follows.
\subsection{Proposed Safety Module in SaLoRA}
As the former analysis states, one of the key reasons for the safety alignment drops after LoRA fine-tuning is the changes in the original safety features. 
Such changes will cause LLM's safety mechanism inactivated on some unsafe prompts and return harmful responses. 
Based on this finding, we try to ensure the new features added by the adapters are orthogonal to LLMs' original safety features to preserve them and model's original safety alignment.
To achieve this goal, we add a linear module with pre-calculated weight $\mathbf{C}_{SaLoRA}$, denoted as the safety module, as shown in Figure \ref{fig:SaLoRA}. The value of $\mathbf{C}_{SaLoRA}$ can be reformulated as the minimizer of the dot-product similarity between the features on harmful prompts, formulated as follows:
\begin{equation}
    \mathbf{C}_S = \mathop{\arg\min}_{\mathbf{C}_S} \left\|\left(\mathbf{C}_S\left(\mathbf{B}_S\mathbf{A}_S\mathbf{X}_h-\mathbf{B_0A_0}\mathbf{X}_h\right)\right)^\top\left(\mathbf{WX}_h\right)\right\|_2,
    \label{obj-1}
\end{equation}
where the subscript "$_S$" here denotes $_{SaLoRA}$ for convenience, $\mathbf{B}_S$, $\mathbf{A}_S$ are the trainable weights for the low-rank adapters with $\mathbf{B}_0$, $\mathbf{A}_0$ representing their initialization, $\mathbf{X}_h$ denotes an input feature of several unsafe prompts and their safe responses, and $\mathbf{W}$ is the original weights for a certain linear layer of LLM. Since $\mathbf{A}_{S}$ and $\mathbf{B}_{S}$ cannot be easily controlled during fine-tuning. We try to optimize the following alternatives to set $\mathbf{C}_{S}$'s value:
\begin{equation}
\mathop{\arg\min}_{\mathbf{C}_S} \|\mathbf{C}_S^\top\mathbf{WX}_h\|_2.
\label{obj-2}
\end{equation}
We note that although optimizing Eqn (\ref{obj-2}) is not equal to Eqn (\ref{obj-1}), minimizing Eqn (\ref{obj-2}) can also control the magnitude of Eqn (\ref{obj-1}) due to the norm inequality, 
\begin{equation}
\left\|\left(\mathbf{C}_S\left(\mathbf{B}_S\mathbf{A}_S\mathbf{X}_h-\mathbf{B_0A_0}\mathbf{X}_h\right)\right)^\top\left(\mathbf{WX}_h\right)\right\|_2\leq \|\mathbf{B}_S\mathbf{A}_S\mathbf{X}_h-\mathbf{B_0A_0X}_h\|_2\|\mathbf{C}_S^\top\mathbf{WX}_h\|_2. 
\end{equation}
However, one can see that $\mathbf{C}_S = 0$ is a trivial solution for both two objectives but it will make the adapters useless. Thus, we assume that the most important safety regions also lie in a low-rank weight space with the orthogonal projection matrix $\mathbf{U}_C \in \mathbb{R}^{d\times r_s}$ and our $\mathbf{C}_{S}$ can be set as the orthogonal complementary space $\mathbf{C}_{S} = \mathbf{I} - \mathbf{U}_C\mathbf{U}_C^\top$ to avoid the trivial solution. The new objective is:
\begin{equation}
\mathop{\arg\min}_{\mathbf{U}_C} \left\|\left(\mathbf{I} - \mathbf{U}_C\mathbf{U}_C^\top\right)\mathbf{WX}_h\right\|_2.
\label{set-safety}
\end{equation}
The optimal results for $\mathbf{U}_C$ are attributed to the singular value decomposition of $\mathbf{WX}_h$ as studies in former works \citep{accessing}. Thus, the weight $\mathbf{C}_{S}$ for the our safety module is set to be $\mathbf{I}- \mathbf{U}_C\mathbf{U}_C^\top$. $\mathbf{U}_C$ is the top $r_s$ singular vectors of the output features $\mathbf{WX}_h$ on harmful prompts and we call $r_s$ as the safety rank in the following. 

\subsection{Initialization of SaLoRA's Trainable Adapters}
Since our additional safety module $\mathbf{C}_{SaLoRA}$ changes the original gradients of adapters' trainable weights and makes the training harder. We propose a new initialization method using the fine-tuning data to help our SaLoRA's trainable adapter $\mathbf{A}_{SaLoRA}$ and $\mathbf{B}_{SaLoRA}$ converge better. Inspired by former research \citep{wanda,cone} on locating task-specific weight regions, we propose our task-specific initialization for SaLoRA's $\mathbf{A}_{SaLoRA}$ and $\mathbf{B}_{SaLoRA}$ to make their training easier. Our task-specific initialization first locates the low-rank task-related weight regions and then initializes $\mathbf{A}_{SaLoRA}$ and $\mathbf{B}_{SaLoRA}$ according to these regions. The details are listed as follows.

Like the methods in the above section, we first try to detect the task-specific regions of certain linear layers in LLMs using the following optimization,
\begin{equation}
\mathop{\arg\min}_{rank(\hat{\mathbf{W}})<r_t} \|\mathbf{WX}_t - \hat{\mathbf{W}}\mathbf{X}_t\|_2,
\end{equation}
where $r_t$ is the task-specific rank to avoid trivial solution, $\mathbf{X}_t$ here denotes the input features of LLM's layers concerning input samples of certain downstream tasks, and $\hat{\mathbf{W}}$ denotes the low-rank weight region related to certain tasks. As illustrated in the above section, the minimizer is,
\begin{equation}
\hat{\mathbf{W}} = \mathbf{UU}^\top\mathbf{W},
\end{equation}
where $\mathbf{U}\in\mathbb{R}^{d\times r_t}$ is the top $r_t$ left singular vectors of $\mathbf{WX}_t$. If we set $r_t = r$, which is the rank for adapters $\mathbf{A}_{S},\mathbf{B}_{S}$, we can directly split it to propose our SaLoRA's initialization:
\begin{equation}
\mathbf{B}_S' = \mathbf{U},\ \mathbf{A}_S' = \mathbf{U}^\top\mathbf{W},
\label{eqn-task-specific-init}
\end{equation}
where the subscript "$_S$" here denotes $_{SaLoRA}$ for convenience. However, one can see that the norms between $\mathbf{B}_S'$ and $\mathbf{A}_S'$ are not balanced as $\mathbf{U}$ here is an orthogonal matrix. The imbalanced weight norm will influence the gradient norm, which may influence the models' generalization as studied in many works \citep{penal_grad,wang2019symmetric}. To balance the weight norms of module $\mathbf{A}_{S}$ and $\mathbf{B}_{S}$ in the adapters, we also do singular decomposition on weight $\mathbf{W}$ as follows,
\begin{equation}
\mathbf{W} = \bar{\mathbf{U}}\bar{\mathbf{S}}\bar{\mathbf{V}}^\top.
\end{equation}
Since the middle diagonal matrix $\mathbf{S}$ consists of singular values of the original weight $\mathbf{W}$ and top $r$ singular values along with singular vectors are the key part of the original weight matrix, we initialize the weight $\mathbf{A}_S$ and $\mathbf{B}_S$ as follows,
\begin{equation}
\mathbf{B}_S = \mathbf{U}\mathbf{U}^\top\bar{\mathbf{U}}_{[:r]}\sqrt{\bar{\mathbf{S}}_{[:r]}},\ \mathbf{A}_S = \sqrt{\bar{\mathbf{S}}}_{[:r]}\bar{\mathbf{V}}_{[:r]}^\top,
\end{equation}
where $\bar{\mathbf{U}}_{[:r]},\bar{\mathbf{V}}_{[:r]}$ denotes the matrix consists of top-$r$ left singular vectors and right singular vectors, and $\bar{\mathbf{S}}_{[:r]}$ denotes the top-$r$ singular values of $\mathbf{W}$. Then we can obtain the low-rank adapter matrices which are highly correlated to downstream tasks.

To ensure the initial stage of LLM with our SaLoRA adapters perform the same as the pre-trained model,  we also re-parameterize the weight of LLM's layer as follows,
\begin{equation}
\mathbf{W}' = \mathbf{W} - \mathbf{C}_S\mathbf{B}_S\mathbf{A}_S.
\end{equation}
The fixed safety module $\mathbf{C}_S$, trainable adapters $\mathbf{A}_S$, $\mathbf{B}_S$ initialized by the task-specific initialization and the residual weights $\mathbf{W}'$ build the whole structure of our SaLoRA as shown in Figure \ref{fig:SaLoRA}. Training the trainable adapters on the fine-tuning datasets, our SaLoRA can help LLMs efficiently obtain certain abilities on some domain-specific knowledge while preserving its safety alignment.

\subsection{Implementation of SaLoRA}
In this section, we will provide a comprehensive pipeline of SaLoRA's training, saving, and inference processes, drawn in Figure \ref{fig:whole-SaLoRA}. Before SaLoRA's training, we first set the weights of our safety module $\mathbf{C}_{SaLoRA}$ with $\mathbf{C}_{SaLoRA}=\mathbf{I}-\mathbf{U}_C\mathbf{U}_C^\top$, where $\mathbf{U}_C$ is the optimal solution for Eqn~(\ref{set-safety}). And we calculate the initialization of adapters $\mathbf{B}_{SaLoRA},\mathbf{A}_{SaLoRA}$ with Eqn~(\ref{eqn-task-specific-init}). Then we can begin SaLoRA's training process by updating trainable adapters $\mathbf{A}_{SaLoRA}$, $\mathbf{B}_{SaLoRA}$ while freezing the residual weight and safety module $\mathbf{C}_{SaLoRA}$ as Figure \ref{fig:whole-SaLoRA}'s left side shows.
\begin{figure}[h]
    \centering
    \includegraphics[width=0.83\textwidth]{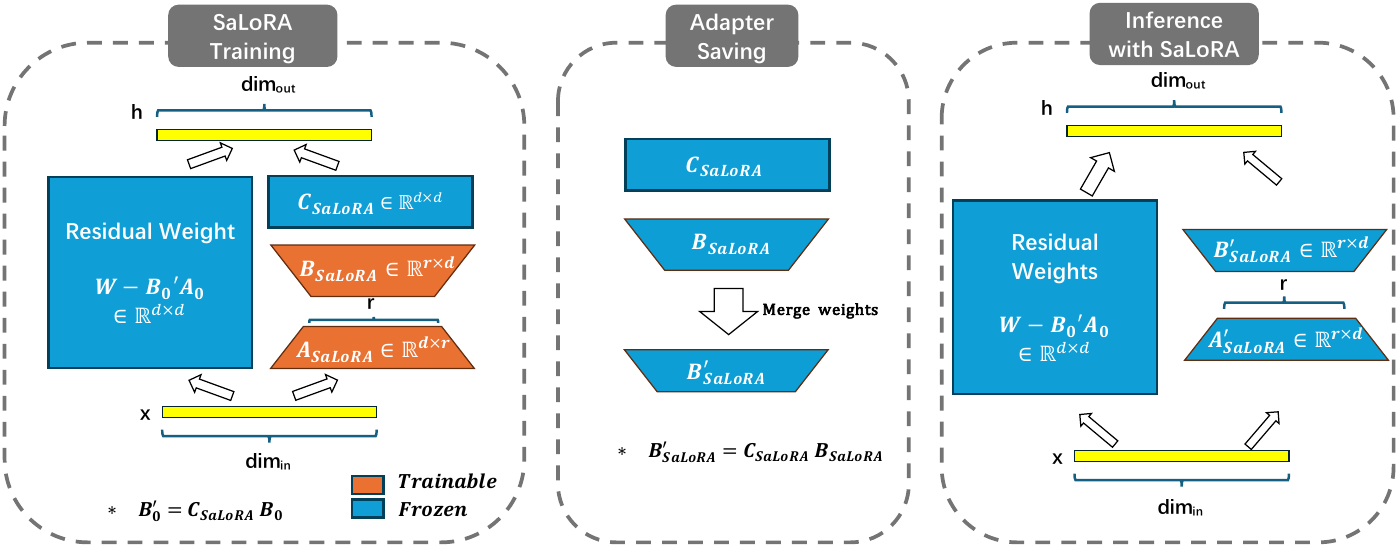}
    \caption{The training, adapter saving, and inference procedure for our SaLoRA. The blue weights here denotes fixed parameters while the orange modules are trainable during fine-tuning. $\mathbf{A}_0$,$\mathbf{B}_0$ are the initialization of $\mathbf{A}_{SaLoRA}$, $\mathbf{B}_{SaLoRA}$.}
    \label{fig:whole-SaLoRA}
\end{figure}

After the LoRA training, the updated weights need to be saved for the following purpose. As the size of our safety module $\mathbf{C}_{SaLoRA}\in \mathbb{R}^{d_{out}\times d_{out}}$  is the same as the original weight $\mathbf{W}$, we need an additional weight merging step at the adapter saving time as shown in the middle of Figure \ref{fig:whole-SaLoRA}. Such an additional weight merge step is a simple multiplication $\mathbf{B}_{SaLoRA}' = \mathbf{C}_{SaLoRA}\mathbf{B}_{SaLoRA}$. Then the storage cost for $\mathbf{B}_{SaLoRA}'$ is the same as $\mathbf{B}_{SaLoRA}$ or vanilla LoRA's adapter. However, as we also need to save $\mathbf{B}_0'$ and  $\mathbf{A}_0$, the storage cost will be twice as large as vanilla LoRA. Luckily, the adapters are much smaller compared with LLM's size, and most adapters published on huggingface are only tens of MB. Considering our performance in the following section, we think such a storage overhead is acceptable.

At the inference stage, our SaLoRA is almost similar to the vanilla LoRA with only one simple step to calculate the residual weights, which won't cost much as they are just some multiplications of low-rank matrices and subtractions. Therefore, our methods can be easily adopted with many popular inference pipelines, like vllm \citep{kwon2023efficient}. 

\section{Empirical Results}
\subsection{Empirical Settings}
In this section, we conduct a variety of experiments to demonstrate the efficiency of our SaLoRA in improving domain-specific abilities while preserving LLM's safety alignment. Firstly, we train four widely used LLMs on Alpaca \citep{wang2023self} with SaLoRA and other LoRA methods, and then we compare the harmful rate of these models' responses with and without some state-of-the-art post-hoc alignment methods on unsafe prompts. Then, we compare the utility of different LoRA-trained models with our SaLoRA on commonsense reasoning.

\par\noindent\textbf{Baselines.} In this paper, we compare our SaLoRA with three widely adopted PEFT training methods: LoRA, DoRA, and PiSSA. Additionally, we adopt four state-of-the-art post-hoc safety alignment methods on LoRA fine-tuned models as baselines, including the input-based Self-Reminder \citep{xie2023defending}, the feature-intervention method InferAligner \citep{wang2024inferaligner}, the training-based approach Vaccine \citep{huang2024vaccine}, and the model editing method Safe LoRA \citep{hsu2024safe}.

\par\noindent\textbf{Other details.} In our SaLoRA, we set $r_s=r_t=32$ in the following experiments, and rank $r$ for LoRA adapters are set as other LoRA variants according to experiments' requirements. To obtain the harmful feature $\mathbf{X}_h$ in Eqn (\ref{set-safety}) for the safety module, we use $70\%$ harmful prompts in the AdvBench dataset \citep{ZWKF23} along with their safe responses, which is also used in baseline method Vaccine and InferAligner. We use the features of training samples as $\mathbf{X}_t$ for the initialization. Experiments are finished on PEFT \citep{peft} training pipeline with a batch size equal to $16$ for all our experiments. The experiments are finished on a single NVIDIA A$100$-$80$GB GPU.

\subsection{Evaluations on LLM Safety after Fine-Tuning}
\label{emp-security}

In this section, we first evaluate the safety alignment of four widely used models, including Llama-2-chat-7B, Llama-2-chat-13B, Llama-3.1-Instruct-8B, and Mistral-7B-Instruct-v0.3 \citep{jiang2023mistral}. We first train them on the Alpaca datasets with AdamW \citep{loshchilov2017decoupled} for $1$ epoch with the learning rate equal to $0.0002$. Then evaluate the harmfulness score on its responses with the Llama-Guard-3-8B \citep{dubey2024llama3herdmodels}. We use $70\%$ harmful prompts in the AdvBench dataset \citep{ZWKF23} for InferAligner, Vaccine, and our SaLoRA. We use the rest $30\%$ harmful prompts in AdvBench for the safety evaluation, denoted as AdvBench's test subset. The harmful rates for LLMs with different methods are reported in Table \ref{t1:safety}. The harmful rate here represents the proportion of all responses that were labeled as unsafe by Llama-Guard-3-8B.
\begin{table}[h]
\caption{The harmful rate of LLMs' responses on harmful prompts in AdvBench's test subset against different LLM on Alpaca with different methods.  ``IA'' here denotes the InferAligner, ``SR'' denotes the self-reminder method, and ``Vac'' denotes the Vaccine method. The bold number denotes the best harmful rate of LLMs with or without the post-hoc alignment methods.}
\centering
\resizebox{\linewidth}{!}{
\begin{tabular}{c|c|c|c|c|c|c|c|c|c}
\hline
\multicolumn{2}{c|}{ }& \multicolumn{2}{c|}{Llama-2-chat-7B} & \multicolumn{2}{c|}{Llama-2-chat-13B}& \multicolumn{2}{c|}{Llama-3.1-Instruct-8B} & \multicolumn{2}{c}{Mistral-Instruct-v0.3}\\
\cline{1-10}
\multicolumn{2}{c|}{ Before Fine-tuning } & \multicolumn{2}{c|}{$0.0\%$} & \multicolumn{2}{c|}{$0.0\%$} & \multicolumn{2}{c|}{$1.4\%$} & \multicolumn{2}{c}{$47.8\%$} \\
\hline
\hline
\multicolumn{2}{c|}{Rank for PEFT training }& $16$ & $32$ & $16$ & $32$ & $16$ & $32$ & $16$ & $32$ \\
\hline
\hline
\multirow{3}{*}{Base PEFT} & LORA & $23.7\%$ & $31.7\%$ & $15.7\%$ & $13.8\%$ & $11.7\%$ & $8.7\%$ & $82.5\%$ & $71.7\%$ \\
& PiSSA & $31.7\%$ & $35.7\%$ & $17.4\%$ & $18.1\%$ & $13.8\%$ & $14.5\%$ & $83.3\%$ & $86.9\%$ \\
& DORA & $23.7\%$ & $25.3\%$ & $18.8\%$ & $16.7\%$ & $10.1\%$ & $9.4\%$ & $64.5\%$ & $66.7\%$ \\
\hline
\hline
\multirow{4}{*}{\makecell[c]{LoRA w.\\ post-hoc \\ alignment}} & LORA w. SR & $11.7\%$ & $9.4\%$ & $7.8\%$ & $5.8\%$ & $10.3\%$ & $7.8\%$ & $79.6\%$ & $72.2\%$ \\
& LORA w. IA & $13.5\%$ & $23.7\%$ & $10.3\%$ & $6.7\%$ & $6.7\%$ & $5.8\%$ & $55.1\%$ & $60.1\%$ \\
& LORA w. Vac & $20.2\%$ & $25.3\%$ & $32.5\%$ & $39.8\%$ & $41.1\%$ & $38.3\%$ & $76.8\%$ & $73.1\%$ \\
& Safe LoRA & $15.7\%$ & $14.5\%$ & $10.1\%$ & $9.4\%$ & $8.5\%$ & $6.7\%$ & $63.0\%$ & $66.7\%$ \\
\hline
\hline
\multirow{2}{*}{\makecell[c]{Ours}} & \textbf{SaLoRA} & $\mathbf{3.5\%}$ & $\mathbf{4.4\%}$ & $\mathbf{2.9\%}$ & $\mathbf{3.5\%}$ & $\mathbf{2.9\%}$ & $\mathbf{1.4\%}$ & $31.7\%$ & $36.5\%$ \\
& \textbf{SaLoRA w. SR} & $\mathbf{1.4\%}$ & $\mathbf{2.9\%}$ & $\mathbf{1.4\%}$ & $\mathbf{1.4\%}$ & $\mathbf{0.0\%}$ & $\mathbf{1.4\%}$ & $\mathbf{15.7\%}$ & $\mathbf{10.7\%}$ \\
\hline
\end{tabular}
}
\label{t1:safety}
\end{table}

The results indicate that fine-tuning with LoRA and its variants, such as DoRA and PiSSA, compromises the original safety alignment for all kinds of LLMs, even when the fine-tuning dataset is free of harmful content. While post-hoc alignment methods can mitigate this issue to some extent, they cannot fully restore the models' safety alignment, as their abilities to produce harmful responses have already been broken. Among them, the InferAligner and Self-Reminder perform the better. Vaccine performs unstable, the possible reason is the alignment data we used for Vaccine's pre-training is too small, only around $300$ samples. 

Beyond those post-hoc methods, SaLoRA demonstrates superior performance in preserving the safety alignment of LLMs after fine-tuning. The harmful rate of LLMs fine-tuned with SaLoRA remains comparable to their original rate before fine-tuning, highlighting SaLoRA’s effectiveness in maintaining alignment. Notably, applying SaLoRA to Mistral-7B-Instruct-v0.3, which had a suboptimal baseline harmful rate of $47.8\%$, makes a clear improvement in safety alignment. We hypothesize that this phenomenon may be due to the safety features being strengthened as other original features may be weakened during the fine-tuning. Additionally, when combined with post-hoc methods such as Self-reminder (denoted as ‘SaLoRA w. SR’ in Table \ref{t1:safety}), the harmful rate improves further, indicating enhanced safety alignment after fine-tuning.

\subsection{Evaluations on Utilities}
\label{utility}
\begin{table}[t]
\caption{The prediction accuracy ($\%$) of Llama-2-chat-7B on different commonsense reasoning tasks and its harmful rate after fine-tuning on commonsense-15k datasets with different PEFT methods.}
\setlength{\tabcolsep}{1.2mm}
\centering
\resizebox{\textwidth}{!}{
\begin{tabular}{c|c|ccccccccc|c}
\toprule
\textbf{r} & \textbf{Method} & \textbf{BoolQ} & \textbf{PIQA}&\textbf{SIQA}& \textbf{HellaSwag} & \textbf{WinoGrande}& \textbf{ARC-e} & \textbf{ARC-c} & \textbf{OBQA} & \textbf{Avg. Acc} & \textbf{Harmful Rate} \\ \hline
\multirow{4}{*}{$16$}   & LoRA & $59.5$ & $71.0$ & $62.2$ & $32.6$ & $51.7$ & $73.1$ & $58.2$ & $64.6$ & $59.2$ & $7.8\%$ \\
                    & DoRA & $60.5$ & $71.2$  & $67.7$ & $33.1$ & $53.1$ & $76.3$ & $61.4$ & $66.7$ & $61.3$ & $2.9\%$\\
                    & PiSSA & $62.4$ & $71.4$ & $62.8$ & $38.1$ & $51.4$ & $77.3$ & $59.6$ & 65.8 & $61.2$  & $5.8\%$\\
                    & SaLoRA & $64.1$ & $72.4$ & $64.2$ & $49.3$ & $54.7$ & $74.9$ & $59.4$ & $63.8$ & $\mathbf{62.9}$ &$\mathbf{0.0\%}$\\ \hline
                    \hline
\multirow{4}{*}{$32$}   & LoRA & $60.3$ & $72.4$ & $63.3$ & $31.1$ & $56.6$ & $74.6$ & $58.1$ & $67.6$ & $60.5$ & $5.8\%$\\
            & DoRA &$ 62.7$ & $73.3$ & $65.3$ & $35.9 $& $51.5$ & $78.5$ & $59.6$ & $68.1$ & $61.9$ & $7.8\%$\\
            & PiSSA & $60.4$ & $73.8$ & $64.4$ & $35.4$ & $52.5$ & $78.7$ & $60.7$ & $66.8$ & $61.9$ & $9.4\%$\\
            & SaLoRA & $60.6$ & $72.7$ & $64.5$ & $47.6$ & $57.1$ &$ 74.0$ & $57.0$ & $64.8$ & $\mathbf{62.3}$  &$\mathbf{1.4\%}$\\ \hline
                    \hline
\multirow{4}{*}{$64$}   & LoRA & $62.4$ & $73.9 $& $64.5$ & $30.1$ & $58.1$ & $78.4$ & $62.2$ & $68.6$ & $62.8$ & $8.5\%$\\
                    & DoRA & $63.2$ & $74.7$ & $67.4$ & $39.3$ & $53.8$ & $78.6$ & $60.8$ & $70.3$ & $63.5$ & $9.4\%$\\
                    & PiSSA & $65.9$ & $75.8$ & $66.5$ &$ 43.4$ & $59.5$ & $80.3$ & $64.5$ & $71$ & $\mathbf{65.9}$ & $10.1\%$\\
                    & SaLoRA & $66$ & $74.1$ & $67.5$ & $52.6$ & $59.6$ & $76.8$ & $62.6$ & $66.4$ & $65.7$  &$\mathbf{0.7\%}$\\ \hline
                    \hline
\end{tabular}}
\label{t2:CS}
\end{table}

In addition to preserving safety, an effective PEFT training method should also demonstrate comparable or superior performance in various tasks against others. Like many other works \citep{vera,dora}, we test eight commonsense reasoning tasks on Llama-2-chat-7B fine-tuned on commonsense-15k \citep{talmor-etal-2019-commonsenseqa} with different methods to validate the effectiveness of our SaLoRA.
These eight tasks are widely used in various works \citep{touvron2023llama-2,vera,dora} to assess commonsense reasoning abilities. We follow the evaluation settings as former works \citep{vera,dora}, and the results are listed in Table \ref{t2:CS}. The results show that our SaLoRA not only outperforms the vanilla LoRA methods on safety-alignment preserving but also performs better on commonsense reasoning tasks across different $r$'s settings by an obvious margin. Compared to DoRA and PiSSA, our method consistently performs better on the "HellaSwag" and "WinoGrande" tasks. When comparing average prediction accuracy, our SaLoRA method also shows comparable results to DoRA and PiSSA and even performs better when $r$ is small.

\subsection{Evaluations of SaLoRA's Safety Alignment after Jailbreak Attacks}
Apart from the safety behaviors on natural harmful prompts in AdvBench, we also conduct experiments to evaluate SaLoRA's performance on jailbreak attack samples. In this section, we conduct the popular multiple GCG attack \citep{ZWKF23} with $1000$ steps on Vicuna and Llama-2 to generate the universal jailbreak suffix of $128$ tokens. Adding it to the harmful prompts in AdvBench, we test the harmful rate of SaLoRA and other methods on Llama-3.1-Instruct-8B. Results are drawn in Figure \ref{fig:gcg}. From the figure, one can see that our SaLoRA still shows a similar safety performance as the original model and outperforms the other methods with a clear advantage. The results demonstrate that our SaLoRA can effectively preserve models' safety alignment.

\begin{figure}[h]
    \centering
    \begin{minipage}[c]{0.44\textwidth}
    \centering
    \includegraphics[width=\textwidth]{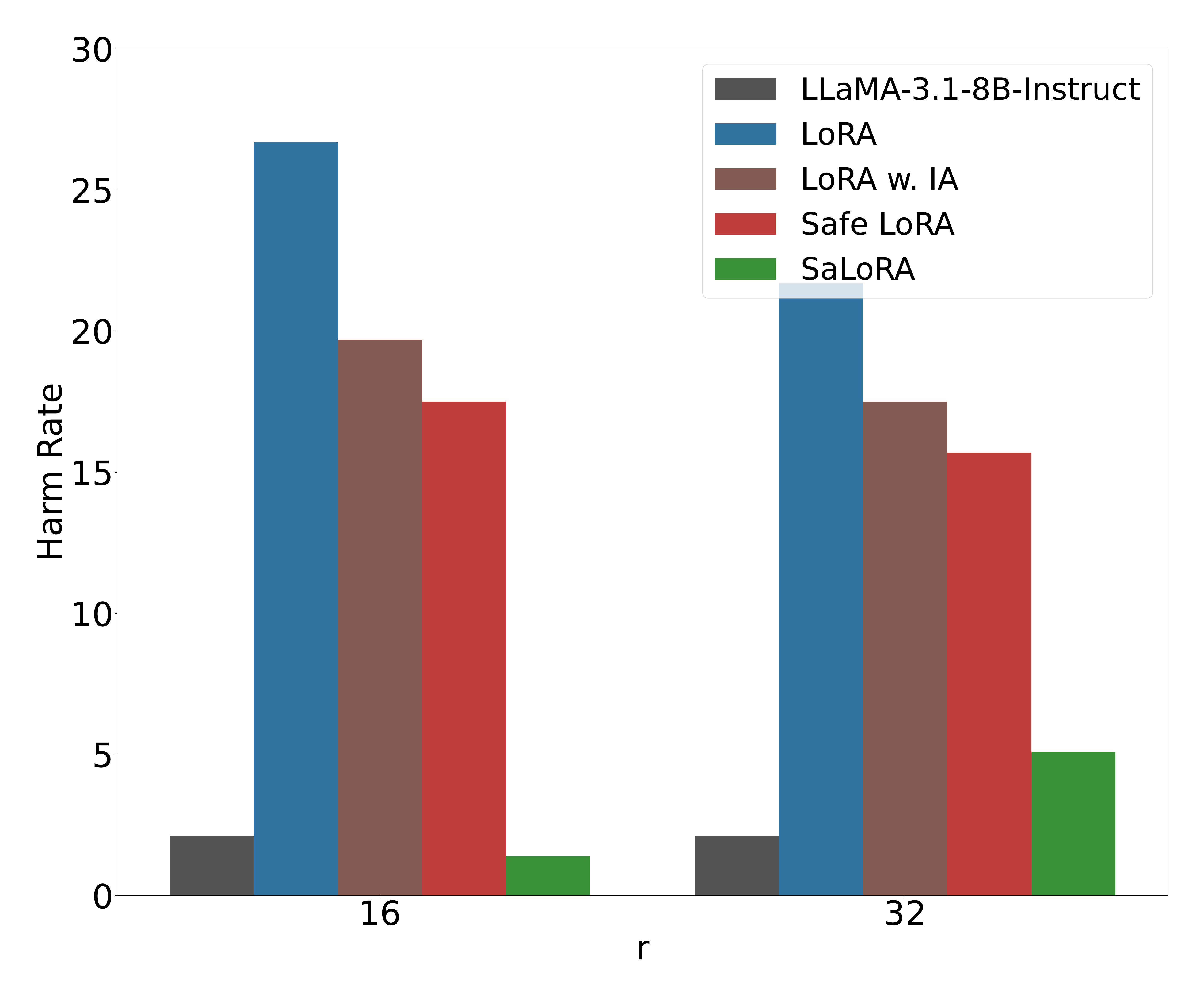}
    \caption{The harmful rate for LoRA, LoRA w.IA, Safe LoRA, and our SaLoRA under the multiGCG attack.}
    \label{fig:gcg}
    \end{minipage}
    \quad \ 
    \begin{minipage}[c]{0.44\textwidth}
    \centering
    \includegraphics[width=0.95\textwidth]{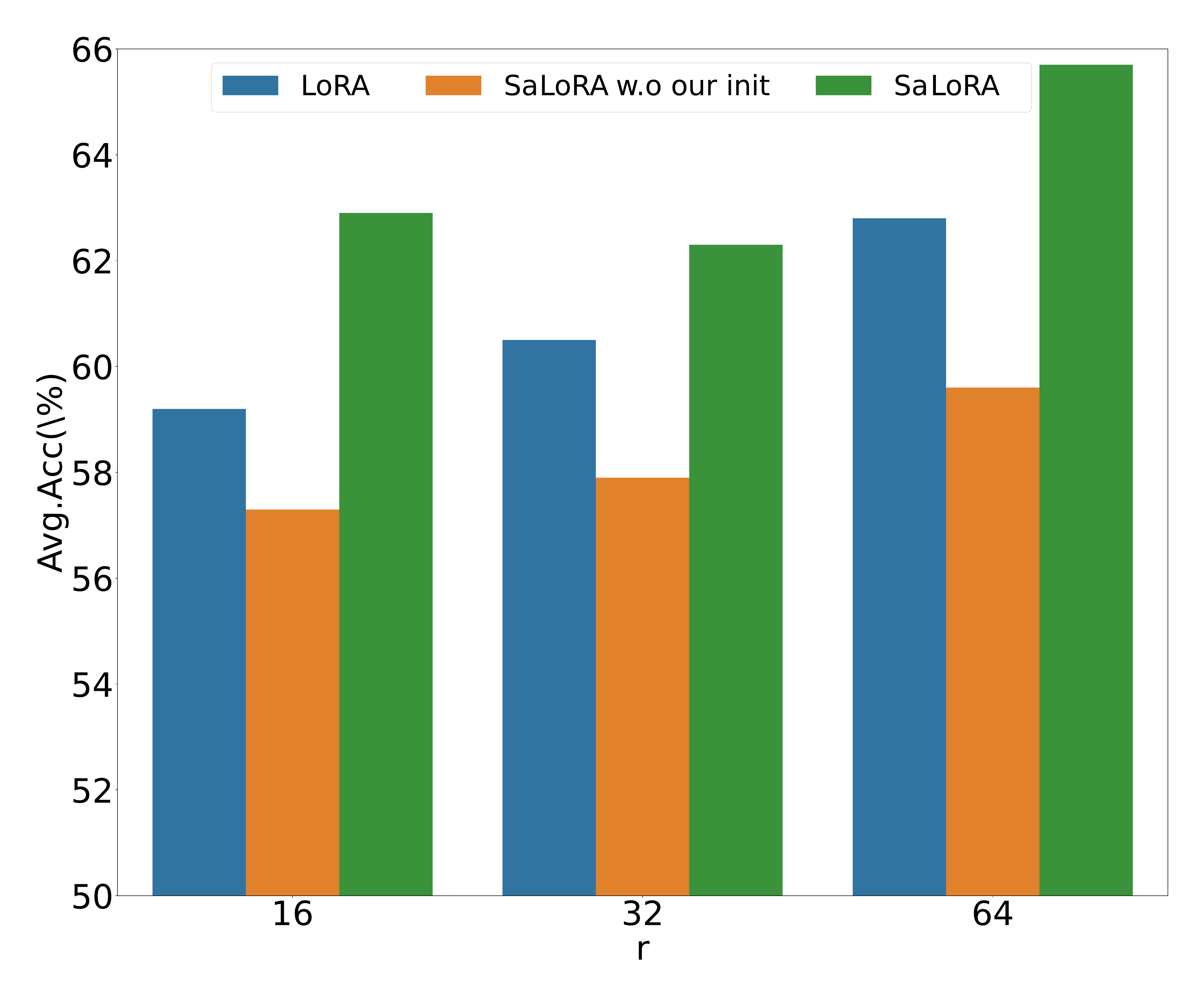}
    \caption{The averaged Commonsense Reasoning accuracy for LoRA, SaLoRA with and without our task-specific initialization.}
    \label{fig:init}
    \end{minipage}
\end{figure}

\subsection{Evaluations on Task-Specific Initialization}

In this section, we explore the effectiveness of our task-specific initialization. We conduct several experiments by replacing our task-specific initialization with the same initialization for $\mathbf{A}$ and $\mathbf{B}$ in LoRA. We then use this new method to fine-tune Llama-2-chat-7B on the Commonsense Reasoning 15k dataset and evaluate its performance as described in Section \ref{utility}, the results are drawn shown in Figure \ref{fig:init} denoted as "SaLoRA w.o our init". The results indicate that our task-specific initialization significantly improves fine-tuning performance compared to the original initialization. Without task-specific initialization, our SaLoRA even performs worse than vanilla LoRA. Such a phenomenon demonstrates the necessity of our proposed initialization. The possible reason is that our fixed safety module will project the gradient to the safe region and cause bad impacts on the model's convergence. 

\subsection{Computation Resources}
In this section, we try to compare the computational resources for our SaLoRA and other methods. We listed the resource consumption of different methods for fine-tuning on Alpaca with batch size equal to $8$ and $r=32$ for $1$ epoch in Table \ref{tab-resource}. As the table shows, our SaLoRA only needs a little longer pre-processing time for calculating the safety module and initializing the adapters $\mathbf{A}_S$ and $\mathbf{B}_S$ compared with the whole training time (0.12 vs 2.98). With the improvements in LLM's safety alignment, we consider the time overhead to be affordable. 
\begin{table}[h]
    \centering
    \caption{Computation resources cost for fine-tuning Alpaca with different methods on A100.}
    \begin{tabular}{ccccc}
        \hline
          & LoRA & DoRA & PiSSA & SaLoRA \\
        \hline
        Tunable Parameters (M) & $16.8$ & $17.1$ & $16.8$ & $16.8$  \\
        \hline
        \hline
        Preprocessing Time (h) & $0$ & $0$ & $0.03$ & $0.12$   \\
        Training Time (h) & $2.95$ & $2.97$ & $2.95$ & $2.98$  \\
        \hline
    \end{tabular}
    \label{tab-resource}
\end{table}

\subsection{Linear Probing Accuracy with SaLoRA}
In the former analysis, we explore the difference between the safety features for pre-trained Llama-2-chat-7B and its LoRA fine-tuned model. Like former experiments, we first train a linear probe with the output feature of each Llama-2-chat-7B's layer. We then classify the test harmful responses using the features from our SaLoRA fine-tuned model. The results are drawn in Figure \ref{linear-prob-safe}.
\begin{figure}[h]
\center
\includegraphics[width=0.9\textwidth]{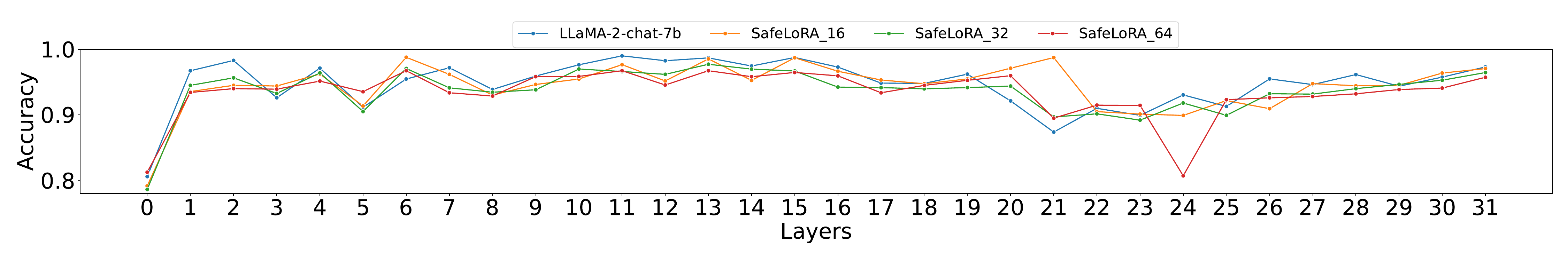}
\caption{Linear probing accuracy for classifying unsafe prompts and their safe responses at each layer on Llama-2-chat-7B before and after our SaLoRA fine-tuning.}
\label{linear-prob-safe}
\end{figure}

From the figure, one can see that the prediction accuracy for both the original model and our SaLoRA fine-tuned models are similar to each other. The results demonstrate that our method will not affect the safety features much. Thus, LLM's safety mechanism can still be activated and the safety alignment can be preserved after fine-tuning.

\section{Conclusion}
In this paper, we first analyze the reasons for LLMs' safety drop after LoRA fine-tuning. With these findings, we first propose a novel safety module to maintain LLM's safety alignment after fine-tuning. Then to make LLMs perform better on the fine-tuning tasks, we also propose a task-specific initialization for the adapter's weights to make it converge better and perform better on fine-tuning tasks. Combined with the two novel methods, we propose our Safety-alignment preserved Low-Rank Adaptation, called SaLoRA. The empirical results demonstrate that our method can both achieve satisfying performance on fine-tuning tasks and maintain strong safety alignment in the original LLMs.

\bibliography{main}
\bibliographystyle{iclr2025_conference}

\newpage
\appendix
\section{Details for Different Post-hoc Alignment Method}
The implementation of Self-reminder, Vaccine, InferAligner and Safe LoRA are listed as follows:
\begin{itemize}
\item \textbf{Self-reminder} To implement this method, we add the additional self-reminder prefix in models' prompts:\emph{Remember, you should be a responsible assistant and should not generate helpless or misleading content!}
\item \textbf{Vaccine} We use the same dataset of our safety modules' calculation for alignment pre-training $\rho=2$.
\item \textbf{InferAligner} We use the same number of harmful prompts as our safety module C’s calculation and then use the original chat model to extract the probing vector with intervention strength equal to $1$.
\item \textbf{Safe LoRA} Following their paper's procedure, we first calculate the alignment matrix using the chat model and base model. Then we choose the same similarity score threshold $\tau=0.35$. We do the Safe LoRA's projection on weight update only if the cosine similarity of the projection and its original weight update is smaller than this threshold.
\end{itemize}

\section{Proof of Proposition \ref{prop-1}}
\label{sec-proof}
Firstly, we state the assumptions for our proposition \ref{prop-1} as below.
\begin{assumption}
LLMs' safety neurons $\mathbf{W}_{\mathbf{S}}$ can also be activated on the benign samples $\mathbf{X}_{\mathbf{W}}$.
\end{assumption}
Such an assumption can be easily proved as many papers have found such a phenomenon \cite{accessing,pochinkov2024dissecting}. Then we restate the Proposition \ref{prop-1} and start to prove it.

\begin{proposition}
Letting $\mathbf{X}_{\mathbf{W}}$ denote the input feature for benign prompts, $\mathbf{Y}_{\mathbf{W}}$ denotes output feature for layer $\mathbf{W}$ with adapters $\mathbf{Y}_{\mathbf{W}} = (\mathbf{W}+\Delta\mathbf{W})\mathbf{X}_{\mathbf{W}}$, and $\mathcal{L}(\mathbf{X}_t)$ is the training loss on benign prompts. If the activation $\|\mathbf{W}_S\mathbf{X}_{\mathbf{W}}\|_F> \gamma$, then the trace of dot product between the gradient of $\Delta \mathbf{W}$ ,denoted as $\rm{grad}_{\Delta \mathbf{W}}$can be lower-bounded by the following equation,
\begin{equation}
\|\mathbf{W}_S \rm{grad}_{\Delta \mathbf{W}}^\top\|_F > \gamma\sigma_{\min}\left(\nabla_{\mathbf{Y}_W}\mathcal{L}(\mathbf{Y}_W)\right),
\end{equation}
where $\sigma_{\min}$ denotes the smallest singular value of given matrix.
\end{proposition}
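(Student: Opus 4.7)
\textbf{Proof proposal for Proposition \ref{prop-1}.} My plan is to reduce the statement to an elementary Frobenius-norm / smallest-singular-value inequality, once the gradient of $\Delta\mathbf{W}$ has been written explicitly via the chain rule. Concretely I would proceed in three short steps: (i) compute $\operatorname{grad}_{\Delta\mathbf{W}}$, (ii) manipulate $\mathbf{W}_S\operatorname{grad}_{\Delta\mathbf{W}}^\top$ so that the factor $\mathbf{W}_S\mathbf{X}_{\mathbf{W}}$ (whose Frobenius norm is bounded below by the hypothesis $\|\mathbf{W}_S\mathbf{X}_{\mathbf{W}}\|_F>\gamma$) appears on one side and $\nabla_{\mathbf{Y}_W}\mathcal{L}$ on the other, and (iii) apply a standard bound of the form $\|AB\|_F \ge \sigma_{\min}(B)\,\|A\|_F$.

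For step (i), since $\mathbf{Y}_{\mathbf{W}} = (\mathbf{W}+\Delta\mathbf{W})\mathbf{X}_{\mathbf{W}}$ depends linearly on $\Delta\mathbf{W}$, the chain rule gives
\begin{equation}
\operatorname{grad}_{\Delta\mathbf{W}} \;=\; \nabla_{\mathbf{Y}_W}\mathcal{L}(\mathbf{Y}_W)\,\mathbf{X}_{\mathbf{W}}^\top.
\end{equation}
Taking transpose and pre-multiplying by $\mathbf{W}_S$ yields
\begin{equation}
\mathbf{W}_S\operatorname{grad}_{\Delta\mathbf{W}}^\top \;=\; \bigl(\mathbf{W}_S\mathbf{X}_{\mathbf{W}}\bigr)\bigl(\nabla_{\mathbf{Y}_W}\mathcal{L}(\mathbf{Y}_W)\bigr)^\top.
\end{equation}
This is the key factorisation: the safety activation $\mathbf{W}_S\mathbf{X}_{\mathbf{W}}$, which the assumption controls from below, is now sitting to the left of the gradient of the loss with respect to the output features.

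For step (iii), I would invoke the Frobenius/spectral inequality: for any matrices $A\in\mathbb{R}^{m\times n}$ and $B\in\mathbb{R}^{n\times p}$ one has $\|AB\|_F \ge \sigma_{\min}(B)\,\|A\|_F$ whenever $B$ has full row rank (equivalently, whenever the smallest singular value in question is non-degenerate). This is proved in one line by writing the SVD $B=U\Sigma V^\top$, noting that the Frobenius norm is unitarily invariant, and observing that each column of $\Sigma V^\top A^\top$ is scaled by a singular value at least $\sigma_{\min}(B)$. Applying this with $A=\mathbf{W}_S\mathbf{X}_{\mathbf{W}}$ and $B=\bigl(\nabla_{\mathbf{Y}_W}\mathcal{L}\bigr)^\top$, and combining with the hypothesis $\|\mathbf{W}_S\mathbf{X}_{\mathbf{W}}\|_F>\gamma$ (valid because of the underlying assumption that the safety neurons are also activated by benign prompts), we obtain the desired
\begin{equation}
\|\mathbf{W}_S\operatorname{grad}_{\Delta\mathbf{W}}^\top\|_F \;>\; \gamma\,\sigma_{\min}\bigl(\nabla_{\mathbf{Y}_W}\mathcal{L}(\mathbf{Y}_W)\bigr).
\end{equation}

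The main obstacle I anticipate is not the algebra but the bookkeeping around the singular-value inequality: one must check that the dimensions of $\nabla_{\mathbf{Y}_W}\mathcal{L}$ support a non-trivial smallest singular value (otherwise the right-hand side is merely zero and the claim becomes vacuous), and one must be explicit that the bound $\|AB\|_F\ge\sigma_{\min}(B)\|A\|_F$ is the variant being used rather than $\sigma_{\min}(A)\|B\|_F$. Given those caveats, the proposition then formalises the intended conclusion: the gradient pushed into $\Delta\mathbf{W}$ has a non-vanishing component along the safety subspace $\mathbf{W}_S$, so the LoRA update cannot be made orthogonal to the safety weights through benign training alone, justifying the perturbation of safety features observed empirically in Section \ref{sec-lora-analysis}.
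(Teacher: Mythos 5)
Your proposal is correct and takes essentially the same route as the paper: compute $\operatorname{grad}_{\Delta\mathbf{W}} = \nabla_{\mathbf{Y}_W}\mathcal{L}(\mathbf{Y}_W)\,\mathbf{X}_{\mathbf{W}}^\top$ by the chain rule, factor $\mathbf{W}_S\operatorname{grad}_{\Delta\mathbf{W}}^\top = (\mathbf{W}_S\mathbf{X}_{\mathbf{W}})(\nabla_{\mathbf{Y}_W}\mathcal{L})^\top$, and apply $\|AB\|_F \ge \sigma_{\min}(B)\|A\|_F$ together with the hypothesis $\|\mathbf{W}_S\mathbf{X}_{\mathbf{W}}\|_F > \gamma$. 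Your remark about checking the dimensional regime under which $\sigma_{\min}$ is non-degenerate (so the bound is non-vacuous) is a caveat the paper leaves implicit, but it does not change the argument.
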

\begin{proof}
First, the Loss can be depicted as,
\begin{equation}
\mathcal{L}(\mathbf{Y}_\mathbf{W}) = \mathcal{L}\left((\mathbf{W}+\Delta \mathbf{W})\mathbf{X}\right).
\end{equation}
Then we can get the gradient of $\Delta \mathbf{W}$ using the chain rule:
\begin{equation}
\rm{grad}_{\Delta \mathbf{W}} = \nabla_{\mathbf{Y}_\mathbf{W}} \mathcal{L}(\mathbf{Y}_\mathbf{W}) \mathbf{X}^\top.
\end{equation}
Thereby, the Frobenius norm of $\mathbf{W}_S \rm{grad}_{\Delta \mathbf{W}}^\top$ can be formulated as follows,
\begin{equation}
\begin{aligned}
\left\|\mathbf{W}_S \rm{grad}_{\Delta \mathbf{W}}^\top\right\|_F &= \left\|\mathbf{W}_S\mathbf{X}\nabla_{\mathbf{Y}_\mathbf{W}} \mathcal{L}(\mathbf{Y}_\mathbf{W})^\top \right\|_F\\
&\geq \sigma_{min}(\nabla_{\mathbf{Y}_\mathbf{W}} \mathcal{L}(\mathbf{Y}_\mathbf{W}))\left\|\mathbf{W}_S\mathbf{X}\right\|_F \\
&>\gamma \sigma_{min}(\nabla_{\mathbf{Y}_\mathbf{W}} \mathcal{L}(\mathbf{Y}_\mathbf{W}))
\end{aligned}
\end{equation}
\end{proof}

\end{document}